\newtheorem{lemma}{Lemma}
\newtheorem{corollary}{Corollary}
\providecommand{\eref}[1]{\eqref{#1}}  
\providecommand{\cref}[1]{Chapter~\ref{#1}}
\providecommand{\fref}[1]{Figure~\ref{#1}}
\providecommand{\tref}[1]{Table~\ref{#1}}
\providecommand{\R}{\ensuremath{\mathbb{R}}}
\providecommand{\E}{\ensuremath{\mathbb{E}}}
\providecommand{\bydef}{\overset{\text{def}}{=}}
\renewcommand{\vec}[1]{\ensuremath{\boldsymbol{#1}}}
\providecommand{\mat}[1]{\ensuremath{\boldsymbol{#1}}}
\providecommand{\calF}{\mathcal{F}}
\providecommand{\calL}{\mathcal{L}}
\providecommand{\calN}{\mathcal{N}}
\providecommand{\calS}{\mathcal{S}}
\providecommand{\mI}{\mat{I}}
\providecommand{\vh}{\vec{h}}
\providecommand{\vx}{\vec{x}}
\providecommand{\vy}{\vec{y}}
\providecommand{\vepsilon}{\vec{\epsilon}}
\providecommand{\veta}{\vec{\eta}}
\providecommand{\fhat}{\widehat{f}}
\newcommand{\subjectto}{\mathop{\mbox{subject\, to}}}
\newcommand{\argmin}[1]{\mathop{\underset{#1}{\mbox{argmin}}}}
\newcommand{\argmax}[1]{\mathop{\underset{#1}{\mbox{argmax}}}}
\newcommand{\minimize}[1]{\mathop{\underset{#1}{\mathrm{minimize}}}}
\icmltitlerunning{One Size Fits All: Can We Train One Denoiser for All Noise Levels?}
\begin{document}

\twocolumn[
\icmltitle{One Size Fits All: Can We Train One Denoiser for All Noise Levels?}



\icmlsetsymbol{equal}{*}

\begin{icmlauthorlist}
\icmlauthor{Abhiram Gnansambandam}{to}
\icmlauthor{Stanley H. Chan}{to,t2}
\end{icmlauthorlist}

\icmlaffiliation{to}{School of Electrical and Computer Engineering,}

\icmlaffiliation{t2}{Department of Statistics, Purdue University, West Lafayette, IN 47906, United States}

\icmlcorrespondingauthor{Stanley Chan}{stanchan@purdue.edu}

\icmlkeywords{Machine Learning, ICML}

\vskip 0.3in
]



\printAffiliationsAndNotice{This paper is presented in the 37-th International Conference on Machine Learning (ICML), Vienna, Austria, July 2020.}  

\begin{abstract}
When training an estimator such as a neural network for tasks like image denoising, it is often preferred to train \emph{one} estimator and apply it to \emph{all} noise levels. The de facto training protocol to achieve this goal is to train the estimator with noisy samples whose noise levels are uniformly distributed across the range of interest. However, why should we allocate the samples uniformly? Can we have more training samples that are less noisy, and fewer samples that are more noisy? What is the optimal distribution? How do we obtain such a distribution? The goal of this paper is to address this training sample distribution problem from a minimax risk optimization perspective. We derive a dual ascent algorithm to determine the optimal sampling distribution of which the convergence is guaranteed as long as the set of admissible estimators is closed and convex. For estimators with non-convex admissible sets such as deep neural networks, our dual formulation converges to a solution of the convex relaxation. We discuss how the algorithm can be implemented in practice. We evaluate the algorithm on linear estimators and deep networks.
\end{abstract}

\section{Introduction}
\label{Intro}

\subsection{``One Size Fits All'' Denoisers}
The following phenomenon could be familiar to those who develop learning-based image denoisers. If the denoiser is trained at a noise level $\sigma$, then its performance is maximized when the testing noise level is also $\sigma$. As soon as the testing noise level deviates from the training noise level, the performance drops  \cite{Choi_2019,Kim_access_2019}. This is a typical mismatch between training and testing, which is arguably universal for all learning-based estimators. When such a problem arises, the most straight-forward solution is to create a suite of denoisers trained at different noise levels and use the one that matches best with the input noisy image (such as those used in the ``Plug-and-Play'' priors \cite{Zhang2017_cvpr,chan2016plug,Chan_2019}). However, this ensemble approach is not effective since the model capacity is multiple times larger than necessary.

A more widely adopted solution is to train \emph{one} denoiser and use it for \emph{all} noise levels. The idea is to train the denoiser using a training dataset containing images of different noise levels. The competitiveness of these ``one size fits all'' denoisers compared to the best individually trained denoisers has been demonstrated in \cite{Zhang2017_cvpr, Zhang2017_arxiv, Mao2016, Remez2017}. However, as we will illustrate in this paper, there is no guarantee for such arbitrarily trained one-size-fits-all denoiser to have a consistent performance over the entire noise range. At some noise levels, usually at the lower tail of the noise range, the performance could be much worse than the best individuals. The cause of this phenomenon is related to how we draw the noisy samples, which is usually \emph{uniform} across the noise range. The question we ask here is that if we allocate more low-noise samples and fewer high-noise samples, will we be able to get a more consistent result?

\subsection{Objective and Contributions}
The objective of this paper is to find a sampling distribution such that for every noise level the performance is consistent. Here, by consistent we meant that the gap between the estimator and the best individuals is balanced. The idea is illustrated in \fref{fig: illustration}. The black curve in the figure represents the ensemble of the best individually trained denoisers. It is a virtual curve obtained by training the denoiser at each noise level. A typical ``one size fits all'' denoiser is trained by using noisy samples from a uniform distribution, which is denoted by the blue curve. This figure illustrates a typical in-consistence where there is a significant gap at low-noise but small gap at high noise. The objective of the paper is to find a new sampling distribution (denoted by the orange bars) such that we can achieve a consistent performance throughout the entire range. The result returned by our method is a trade-off between the overall performance and the worst cases scenarios.

\begin{figure}[!]
\centering
\includegraphics[width=\linewidth]{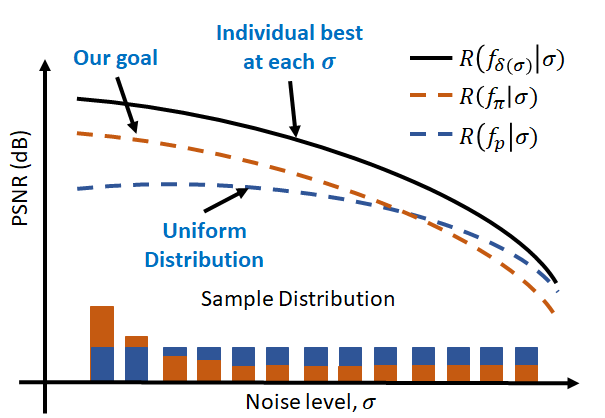}
\vspace{-4ex}
\caption{Illustration of the objective of this paper. The typical uniform sampling (blue bars) will yield a performance curve that is skewed towards one side of the noise range. The objective of this paper is to find an optimal sampling distribution (orange bars) such that the performance is consistent across the noise range. Notations will be defined in Section \ref{sec: Formulation}. We plot the risks in terms of the peak signal-to-noise ratio.}
\label{fig: illustration}
\vspace{-2ex}
\end{figure}

\vspace{4ex}
The key idea behind the proposed method is a minimax formulation. This minimax optimization minimizes the overall risk of the estimator subject to the constraint that the worst case performance is bounded. We show that under the standard convexity assumptions on the set of all admissible estimators, we can derive a provably convergent algorithm by analyzing the dual. For estimators whose admissible set is not convex, solutions returned by our dual algorithm are the convex-relaxation results. We present the algorithm, and we show that steps of the algorithm can be implemented by iteratively updating the sample distributions.

\section{Related Work}
While the above sampling distribution problem may sound familiar, its solution does not seem to be available in the computer vision and machine learning literature.

\textbf{Image Denoising}. Recent work in image denoising has been focusing on developing better neural network architectures. When encountering multiple noise levels, \cite{Zhang2017_cvpr} presented two approaches: Create a suite of denoisers at different noise levels, or train a denoiser by uniformly sampling noise levels from the range. For the former approach, \cite{Choi_2019} proposed to combine the estimators by solving a convex optimization problem. \cite{Gharbi:2016:DJD:2980179.2982399} proposed an alternative approach by introducing a noise map as an extra channel to the network. Our paper shares the same overall goal as \cite{Kim_access_2019}. However, they address problem by modifying the network structure whereas we do not change the network. Another related work is \cite{Gao_Grauman_ICCV17} which proposed an ad-hoc solution to the sample distribution. Our paper offers theoretical justification, convergence guarantee, and optimality. We should also mention \cite{Wang_Morel_2014} which scales the image intensities in order to match with the denoiser trained at a single noise level.

\textbf{Active Learning / Experimental Design}. Adjusting the distribution of the training samples during the learning procedure is broadly referred to active learning in machine learning \cite{settles2009active} or experimental design in statistics  \cite{chaloner1995bayesian}. Active learning / experimental design are typically associated with limited training data \cite{gal2017deep, sener2018active}. The goal is to optimally select the next data point (or batch of data points) so that we can estimate the model parameters, e.g., the mean and variance. The problem we encounter here is not about limited data because we can synthesize as much data as we want since we know the image formation process. The challenge is how to allocate the synthesized data.

\textbf{Constrained Optimization in Neural Network}. Training neural networks under constraints have been considered in classic optimization literature \cite{platt1988constrained} \cite{zak1995solving}. More recently, there are optimization methods for solving inequality constrained problems in neural networks \cite{pathak2015constrained}, and equality constrained problems \cite{marquez2017imposing}. However, these methods are generic approaches. The convexity of our problem allows us to develop a unique and simple algorithm.

\textbf{Fairness Aware Classification}. The task of seeking ``balanced samples'' can be considered as improving the fairness of the estimator. Literature on fairness aware classification is rapidly growing. These methods include modifying the network structure, the data distribution, and loss functions \cite{zafar2015fairness,pedreshi2008discrimination,calders2010three,hardt2016equality,kamishima2012fairness}. Putting the fairness as a constrained optimization has been proposed by \cite{zafar2017fairness}, but their problem objective and solution are different from ours.

\section{Problem Formulation}
\label{sec: Formulation}
\subsection{Training and Testing Distributions: $\pi(\sigma)$ and $p(\sigma)$}
Consider a clean signal $\vy \in \R^n$. We assume that this clean signal is corrupted by some random process to produce a corrupted signal $\vx_\sigma \in \R^n$. The parameter $\sigma$ can be treated in a broad sense as the level of uncertainty. The support of $\sigma$ is denoted by the set $\Omega$. We assume that $\sigma$ is a random variable with a probability density function $p(\sigma)$.

\textbf{Examples}. In a denoising problem, the image formation model is given by $\vx_\sigma = \vy + \sigma \veta$ where $\veta$ is a zero-mean unit-variance i.i.d. Gaussian noise vector. The noise level is measured by $\sigma$. For image deblurring, the model becomes $\vx_\sigma = \vh_{\sigma} \ast \vy + \vepsilon$ where $\vh_{\sigma}$ denotes the blur kernel with radius $\sigma$, ``$\ast$'' denotes convolution, and $\vepsilon$ is the noise. In this case, the uncertainty is associated with the blur radius. \hfill $\square$

We focus on \emph{learning-based} estimators. We define an estimator $f: \R^n \rightarrow \R^n$ as a mapping that takes a noisy input $\vx_\sigma$ and maps it to a denoised output $f(\vx_\sigma)$. We assume that $f$ is parametrized by $\theta \in \Theta$, but for notation simplicity we omit the parameter $\theta$ when the context is clear. The set of all admissible $f$'s is denoted as $\calF = \{f(\cdot, \theta) \,|\, \theta \in \Theta\}$.

To train the estimator $f$, we draw training samples from the set $\calS \bydef \{\vx_{\sigma}^{(\ell)} \;|\; \ell = 1,\ldots,N, \, \sigma \overset{\text{i.i.d.}}{\sim} \pi(\sigma)\}$, where $\ell$ refers to the $\ell$-th training sample, and $\pi(\sigma)$ is the distribution of the noise levels in the training samples. Note that $\pi$ is not necessarily the same as $p$. The distribution $\pi$ is the distribution of the \emph{training} samples, and the distribution $p$ is the distribution of the \emph{testing} samples. In most learning scenarios, we want $\pi$ to match with $p$ so that the generalization error is minimized. However, in this paper, we are purposely designing a $\pi$ which is different from $p$ because the goal is to seek an optimal trade-off. To emphasize the dependency of $f$ on $\pi$, we denote $f$ as $f_\pi$.

\subsection{Risk and Conditional Risk: $R(f)$ and $R(f|\sigma)$}
Training an estimator $f_\pi$ requires a loss function. We denote the loss between a predicted signal $f_\pi(\vx_\sigma)$ and the truth $\vy$ as $\calL(f_{\pi}(\vx_\sigma),\vy)$. An example of the loss function is the Euclidean distance:
\begin{equation}
    \calL(f_{\pi}(\vx),\vy) = \|f_{\pi}(\vx_\sigma) - \vy\|^2.
\end{equation}
Other types of loss functions can also be used as long as they are convex in $f_\pi$.

To quantify the performance of the estimator $f_\pi$, we define the notion of \emph{conditional risk}:
\begin{equation}
    R(f_{\pi} \;|\; \sigma ) \bydef \E_{(\vx_\sigma,\vy)|\sigma} \bigg[\calL(f_{\pi}(\vx_\sigma),\vy) \;|\; \sigma \bigg].
\end{equation}
The conditional risk can be interpreted as the risk of the estimator $f_\pi$ evaluated at a particular noise level $\sigma$. The overall risk is defined through iterated expectation:
\begin{align}
    R(f_{\pi})
    &\bydef
    \E_{\sigma \sim p(\sigma)} \big\{  R(f_{\pi} \;|\; \sigma ) \big\} \notag \\
    &= \int \underset{= R(f_\pi\,|\,\sigma)}{\underbrace{\E_{(\vx_\sigma,\vy)|\sigma}[\calL(f_{\pi}(\vx_\sigma),\vy) \;|\; \sigma]}} \;\; p(\sigma) d\sigma.
\end{align}
Note that the expectation of $\sigma$ is taken with respect to the true distribution $p$ since we are evaluating the estimator $f_\pi$.

\subsection{Three Estimators: $f_\pi$, $f_p$ and $f_{\delta(\sigma)}$}
The estimator $f_\pi$ is determined by minimizing the training loss. In our problem, since the training set follows a distribution $\pi(\sigma)$, it holds that $f_\pi$ is determined by
\begin{equation}
    f_\pi \bydef \argmin{f} \;\; \int R(f \,|\, \sigma) \pi(\sigma) \; d\sigma.
    \label{eq: f pi argmin}
\end{equation}
This definition can be understood by noting that $R(f \;|\; \sigma)$ is the conditional risk evaluated at $\sigma$. Since $\pi(\sigma)$ specifies the probability of obtaining a noisy samples with noise level $\sigma$, the integration in \eref{eq: f pi argmin} defines the training loss when the noisy samples are proportional to $\pi(\sigma)$. Therefore, by minimizing this training loss, we will obtain $f_\pi$.

\textbf{Example}. Suppose that we are training a denoiser over the range of $\sigma \in [a,b]$. If the training set is uniform, i.e., $\pi(\sigma) = 1/(b-a)$ for $\sigma \in [a,b]$ and is 0 otherwise, then $f_\pi$ is obtained by minimizing the sum of the individual losses $f_\pi = \text{argmin}_f \sum_{\ell=1}^N \calL(f(\vx_\sigma^{(\ell)}), \vy^{(\ell)})$ where the $N$ training samples have equally likely noise levels.  \hfill$\square$

If we replace the training distribution $\pi$ by the testing distribution $p$, then we obtain the following estimator:
\begin{equation}
    f_p = \argmin{f} \;\; \int R(f \;|\; \sigma) p(\sigma) \; d\sigma = \argmin{f} \;\; R(f).
    \label{eq: fp}
\end{equation}
Since $f_p$ minimizes the overall risk, we expect  $R(f_p) \le R(f_\pi)$ for all $\pi$. This is summarized in the lemma below.
\begin{lemma}
The risk of $f_p$ is a lower bound of the risk of all other $f_{\pi}$:
\begin{equation}
    R(f_p) \le R(f_{\pi}), \quad\quad \forall \pi.
\end{equation}
\end{lemma}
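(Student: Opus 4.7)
The plan is to observe that the lemma is essentially a tautology that follows directly from the definition of $f_p$ in equation \eqref{eq: fp}. The key point is that $f_p$ is, by construction, the minimizer of $R(\cdot)$ over the entire admissible set $\calF$, so evaluating $R$ at any other element of $\calF$ — including any $f_\pi$ — cannot give a smaller value.

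Concretely, I would proceed in three short steps. First, invoke the definition $f_p = \argmin_{f \in \calF} R(f)$ from \eqref{eq: fp}, which immediately yields $R(f_p) \le R(f)$ for every $f \in \calF$. Second, note that for any choice of training distribution $\pi$, the estimator $f_\pi$ defined by \eqref{eq: f pi argmin} is itself an element of $\calF$ (it is the minimizer of a different objective over the same admissible set). Third, specialize the inequality from the first step with $f = f_\pi$ to conclude $R(f_p) \le R(f_\pi)$. Since this holds for arbitrary $\pi$, the lemma follows.

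There is no genuine obstacle here, and the proof requires no convexity, compactness, or measurability assumption beyond what is already implicit in assuming that the $\argmin$ in \eqref{eq: fp} is attained. The only minor subtlety worth flagging is that the statement tacitly assumes the minimizer $f_p$ exists in $\calF$; if one only has an infimum, the argument still goes through by writing $\inf_{f \in \calF} R(f) \le R(f_\pi)$ and noting that the left-hand side equals $R(f_p)$ when the minimizer exists. I would write the proof in two or three lines and not belabor the point, since the content of the lemma is really just a pointer reminding the reader that $f_p$ is the Bayes-optimal benchmark against which all training-distribution choices $\pi$ will be compared in the minimax formulation that follows.
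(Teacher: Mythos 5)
Your proposal is correct and follows essentially the same route as the paper's own proof: both simply invoke the definition $f_p = \argmin{f} R(f)$ to get $R(f_p) = \inf_f R(f) \le R(f_\pi)$ for any $\pi$. Your additional remarks on $f_\pi \in \calF$ and on attainment of the minimizer are sensible but do not change the argument.
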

\begin{proof}
By construction, $f_p$ is the minimizer of the risk according to \eref{eq: fp}, it holds that $R(f_p) = \inf_f R(f)$. Therefore, for any $\pi$ we have $R(f_p) \le R(f_\pi)$.
\end{proof}

The consequence of Lemma 1 is that if we minimize $R(f)$ without any constraint, we will reach a trivial solution of $\pi = p$. This explains why this paper is uninteresting if the goal is to purely minimize the generalization error without considering any constraint.

Before we proceed, let us define one more distribution $\delta$ which has a point mass at a particular $\sigma$, i.e., $p(\sigma)$ is a delta function such that $p(\sigma') = \delta(\sigma'-\sigma)$. This estimator is found by simply minimizing the training loss
\begin{equation}
    f_{\delta(\sigma)} = \argmin{f} \;\; \int R(f \;|\; \sigma') \delta(\sigma' - \sigma) \; d\sigma',
    \label{eq: f delta}
\end{equation}
which is equivalent to minimizing the conditional risk $f_{\delta(\sigma)} = \argmin{f} \; R(f \,|\, \sigma)$. Because we are minimizing the conditional risk at a particular $\sigma$, $f_{\delta(\sigma)}$ gives the best individual estimate at $\sigma$. However, having the best estimate at $\sigma$ does not mean that $f_{\delta(\sigma)}$ can generalize. It is possible that $f_{\delta(\sigma)}$ performs well for one $\sigma$ but poorly for other $\sigma$'s. However, the ensemble of all these point-wise estimates $\{f_{\delta(\sigma)}\}$ will form the lower bound of the conditional risks such that $R(f_{\delta(\sigma)} \;|\; \sigma ) \le R(f_p \;|\; \sigma)$ at every $\sigma$.

\subsection{Main Problem (P1)}
We now state the main problem. The problem we want to solve is the following constrained optimization:
\begin{align}
    f^* \;\;\; \bydef&\;\; \argmin{f \in \calF} \quad\quad\;\; R(f) \tag{P1} \label{eq: main}\\
    &\subjectto     \quad\;\;\; \sup_{\sigma \in \Omega} \bigg\{ R(f|\sigma)  - R(f_{\delta(\sigma)}|\sigma) \bigg\}\le \epsilon. \notag
\end{align}
The objective function reflects our original goal of  minimizing the overall risk. However, instead of doing it without any constraint (which has a trivial solution of $f^* = f_p$), we introduce a constraint that the gap between the current estimator $f$ and the best individual $f_{\delta(\sigma)}$ is no worse than $\epsilon$, where $\epsilon$ is some threshold. The intuition here is that we are willing to sacrifice some of the overall risk by limiting the gap between $f$ and $f_{\delta(\sigma)}$ so that we have a consistent performance over the entire range of noise levels.

Referring back to \fref{fig: illustration}, we note that the black curve is $R(f_{\delta(\sigma)} | \sigma)$. The blue curve is $R(f_p|\sigma)$ for the case where $p(\sigma)$ is a uniform distribution. The orange curve is $R(f^*|\sigma)$. We show in Section \ref{sec:DualP1} that $f^*$ is equivalent to $f_\pi$ for some $\pi(\sigma)$. Note that all curves are the conditional risks.

\section{Dual Ascent}
In this section we discuss how to solve \eref{eq: main}. Solving \eref{eq: main} is challenging because minimizing over $f$ involves updating the estimator $f$ which could be nonlinear w.r.t. the loss. To address this issue, we first show that as long as the admissible set $\calF$ is convex, \eref{eq: main} is convex even if the estimators $f$ themselves are non-convex. We then derive an algorithm to solve the dual problem.

\subsection{Convexity of \eref{eq: main}}
We start by showing that under mild conditions, \eref{eq: main} is convex.

\begin{lemma}
\label{lemma: convex R}
Let $\calF$ be a closed and convex set. Then, for any convex loss function $\calL$, the risk $R(f)$ and the conditional risk $R(f|\sigma)$ are convex in $f$, for any $\sigma \in \Omega$.
\end{lemma}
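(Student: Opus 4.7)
The plan is to build convexity from the ground up in three layers: first at the level of the pointwise loss, then at the level of the conditional risk via expectation, and finally at the level of the overall risk via a second integration. Convexity of the admissible set $\calF$ is needed only to ensure that convex combinations $\lambda f_1 + (1-\lambda) f_2$ with $f_1,f_2 \in \calF$ and $\lambda \in [0,1]$ actually lie in $\calF$, so that the convexity statement is well-posed; beyond that, all the analytic work is on the risk functionals themselves.

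First I would fix arbitrary $f_1,f_2 \in \calF$ and $\lambda \in [0,1]$, set $f_\lambda = \lambda f_1 + (1-\lambda) f_2$, which lies in $\calF$ by convexity of $\calF$, and observe that for every fixed realization $(\vx_\sigma,\vy)$ the pointwise evaluation is linear: $f_\lambda(\vx_\sigma) = \lambda f_1(\vx_\sigma) + (1-\lambda) f_2(\vx_\sigma)$. Composing a convex function with an affine (here linear) map preserves convexity, so
\begin{equation*}
\calL(f_\lambda(\vx_\sigma),\vy) \;\le\; \lambda\, \calL(f_1(\vx_\sigma),\vy) + (1-\lambda)\, \calL(f_2(\vx_\sigma),\vy).
\end{equation*}
This is the fundamental pointwise convexity inequality; every subsequent step follows by integrating it against a nonnegative measure.

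Next I would take the conditional expectation of the above inequality with respect to $(\vx_\sigma,\vy)\,|\,\sigma$. Since conditional expectation is linear and monotone, we obtain
\begin{equation*}
R(f_\lambda \,|\, \sigma) \;\le\; \lambda\, R(f_1 \,|\, \sigma) + (1-\lambda)\, R(f_2 \,|\, \sigma),
\end{equation*}
for every $\sigma \in \Omega$, giving convexity of $R(\cdot\,|\,\sigma)$ in $f$. Integrating once more against the nonnegative density $p(\sigma)$ on $\Omega$ yields
\begin{equation*}
R(f_\lambda) \;\le\; \lambda\, R(f_1) + (1-\lambda)\, R(f_2),
\end{equation*}
which is convexity of the unconditional risk. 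Both steps are just Jensen-type manipulations and rely on nothing beyond the linearity and monotonicity of integration.

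There is no real obstacle here: the only point requiring any care is to justify that $f \mapsto f(\vx_\sigma)$ is a linear operation, so that $\calL(\cdot,\vy)$ convex together with this linear map gives convexity in $f$ (rather than only in the output vector). Once that is noted, the argument is a textbook "convexity is preserved by affine precomposition and by nonnegative weighted averaging" chain, and convexity of $\calF$ is used solely so that $f_\lambda$ is an element of the admissible set at which the inequality is being asserted.
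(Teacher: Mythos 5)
Your proof is correct and follows essentially the same route as the paper's: apply convexity of $\calL$ pointwise to the convex combination $\lambda f_1 + (1-\lambda)f_2$, pass the inequality through the conditional expectation to get convexity of $R(\cdot\,|\,\sigma)$, and then integrate against the nonnegative density $p(\sigma)$ to get convexity of $R$. The only difference is that you make explicit the linearity of the evaluation map $f \mapsto f(\vx_\sigma)$, a point the paper leaves implicit.
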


\vspace{-2ex}
\begin{proof}
Let $f_1$ and $f_2$ be two estimators in $\calF$ and let $\lambda \in [0,1]$ be a constant. Then, by the convexity of $\calL$, the conditional risk $R(\cdot \,|\, \sigma)$ satisfies
\begin{align*}
R(\lambda f_1 + (1-\lambda)f_2 \,|\, \sigma) &= \E\big\{ \calL( \lambda f_1 + (1-\lambda) f_2 ) \,|\, \sigma \big\}\\
&\le \E\big\{ \lambda \calL(  f_1 ) + (1-\lambda) \calL( f_2 ) \,|\, \sigma \big\}\\
&= \lambda R(f_1|\sigma) + (1-\lambda) R(f_2|\sigma),
\end{align*}
which is convex. The overall risk $R(f)$ is found by taking the expectation of the conditional risk over $\sigma$. Since taking expectation is equivalent to integrating the conditional risk times the distribution $p(\sigma)$ (which is positive), convexity preserves and so $R(f)$ is also convex.
\end{proof}

\vspace{-2ex}
We emphasize that the convexity of $R(\cdot)$ is defined w.r.t. $f$ and not the underlying parameters (e.g., the network weights). For any convex combination of the parameters $\theta$'s, we have that $R(f(\cdot \,|\, \lambda \theta_1 + (1-\lambda)\theta_2)) \not\le \lambda R(f(\cdot \,|\, \theta_1)) + (1-\lambda)R(f(\cdot \,|\, \theta_2))$ because $f$ is not necessarily convex.

The following corollary shows that the optimization problem \eref{eq: main} is convex.
\begin{corollary}
\label{cor: convexity P1}
Let $\calF$ be a closed and convex set. Then, for any convex loss function $\calL$, \eref{eq: main} is convex in $f$.
\end{corollary}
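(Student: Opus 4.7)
The plan is to verify the two standard ingredients that make an optimization problem convex: (i) the objective is a convex function of $f$ on $\calF$, and (ii) the feasible set is a convex subset of $\calF$. Part (i) is essentially free: Lemma~\ref{lemma: convex R} already establishes that $R(f)$ is convex in $f$ whenever $\calF$ is convex and $\calL$ is convex.

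For part (ii), the main observation I would make is that $R(f_{\delta(\sigma)}\,|\,\sigma)$ does not depend on the decision variable $f$. By the definition in \eref{eq: f delta}, $f_{\delta(\sigma)}$ is the minimizer of the conditional risk at the fixed noise level $\sigma$, so $R(f_{\delta(\sigma)}\,|\,\sigma)$ is just a (possibly $\sigma$-dependent) constant with respect to $f$. Consequently, for each fixed $\sigma \in \Omega$, the mapping
\begin{equation*}
f \;\mapsto\; R(f\,|\,\sigma) - R(f_{\delta(\sigma)}\,|\,\sigma)
\end{equation*}
is the convex function $R(\cdot\,|\,\sigma)$ (from Lemma~\ref{lemma: convex R}) shifted by a constant, and is therefore itself convex in $f$.

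Next I would invoke the elementary fact that a pointwise supremum of convex functions is convex. Applied to the family indexed by $\sigma \in \Omega$, this gives that
\begin{equation*}
g(f) \;\bydef\; \sup_{\sigma \in \Omega}\Bigl\{ R(f\,|\,\sigma) - R(f_{\delta(\sigma)}\,|\,\sigma) \Bigr\}
\end{equation*}
is convex in $f$. The constraint in \eref{eq: main} is exactly $g(f) \le \epsilon$, i.e., a sublevel set of a convex function, which is convex. Intersecting with the convex admissible set $\calF$ keeps the feasible set convex. Combined with the convex objective $R(f)$, this makes \eref{eq: main} a convex program.

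I do not anticipate any real obstacle here; the proof is a short chain of standard convexity-preserving operations built on top of Lemma~\ref{lemma: convex R}. The one point I would flag explicitly, to avoid a subtle misread, is that $R(f_{\delta(\sigma)}\,|\,\sigma)$ must be interpreted as a constant (the best achievable conditional risk at $\sigma$) rather than as a function of the optimization variable $f$; otherwise the convexity of the constraint in $f$ would not follow from shifting alone.
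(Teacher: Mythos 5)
Your proposal is correct and follows essentially the same route as the paper: both rest on Lemma~\ref{lemma: convex R} for convexity of $R(f)$ and $R(f\,|\,\sigma)$, and both hinge on the observation that $R(f_{\delta(\sigma)}\,|\,\sigma)$ is constant in $f$. The only cosmetic difference is that you keep the supremum and use ``a pointwise sup of convex functions is convex,'' whereas the paper rewrites the sup as a for-all condition and views the feasible set as an intersection of convex sublevel sets; these are interchangeable.
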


\vspace{-2ex}
\begin{proof}
Since the objective function $R$ is convex (by Lemma \ref{lemma: convex R}), we only need to show that the constraint set is also convex. Note that the ``sup'' operation is equivalent to requiring $R(f|\sigma) - R(f_{\delta(\sigma)|\sigma}) \le \epsilon$ for \emph{all} $\sigma \in \Omega$. Since $R(f_{\delta(\sigma)}|\sigma)$ is constant w.r.t. $f$, we can define $\epsilon(\sigma)\bydef \epsilon + R(f_{\delta(\sigma)}|\sigma)$ so that the constraint becomes $R(f|\sigma) - \epsilon(\sigma) \le 0$. Consequently the constraint set is convex because the conditional risk $R(f|\sigma)$ is convex.
\end{proof}

The convexity of $\calF$ is subtle but essential for Lemma \ref{lemma: convex R} and Corollary \ref{cor: convexity P1}. In a standard optimization over $\R^n$, the convexity is granted if the admissible set is an interval in $\R^n$. In our problem, $\calF$ denotes the set of all admissible estimators, which by construction are parametrized by $\theta$. Thus, the convexity of $\calF$ requires that a convex combination of two admissible $f$'s remains admissible. All estimators based on generalized linear models satisfy this property. However, for deep neural networks it is generally unclear how the topology looks like although some recent studies are suggesting negative results \cite{Petersen_2018}. Nevertheless, even if $\calF$ is non-convex, we can solve the dual problem which is always convex. The dual solution provides the convex-relaxation of the primal problem. The duality gap is zero when the Slater's condition holds, i.e., when $\calF$ is convex and $\epsilon$ is chosen such that the constraint set is strictly feasible.

\subsection{Dual of \eref{eq: main}}\label{sec:DualP1} Let us develop the dual formulation of \eref{eq: main}. The dual problem is defined through the Lagrangian:
\begin{align}
    &L(f,\lambda) \notag\\
    &\bydef R(f) + \int \bigg\{ R(f|\sigma)  - \underset{\bydef\epsilon(\sigma)}{\underbrace{\big(R(f_{\delta(\sigma)}|\sigma) + \epsilon\big)}} \bigg\} \lambda(\sigma) d\sigma \notag\\
    &= \int R(f|\sigma) \bigg\{ p(\sigma) + \lambda(\sigma) \bigg\} d\sigma - \int \epsilon(\sigma) \lambda(\sigma) d\sigma, \label{eq: Lagrangian}
\end{align}
by which we can determine the Lagrange dual function as
\begin{align}
g(\lambda) = \inf_f L(f,\lambda) \label{eq: P1 g},
\end{align}
and the dual solution:
\begin{align}
\lambda^*
&= \argmax{\lambda \ge 0} \;  g(\lambda)  \notag \\
&= \argmax{\lambda \ge 0} \; \bigg\{\inf_f \bigg\{ \int R(f|\sigma) \big[ p(\sigma) + \lambda(\sigma) \big] d\sigma\bigg\} \notag\\
&\quad\quad\quad\quad\quad\quad\quad - \int \epsilon(\sigma)\lambda(\sigma)d\sigma\bigg\}.\label{eq: dual}
\end{align}

Given the dual solution $\lambda^*$, we can translate it back to the primal solution $\fhat$ by minimizing the inner problem in \eref{eq: dual}, which is
\begin{equation}
\fhat = \argmin{f} \; \int R(f|\sigma) \underset{\bydef \pi^*(\sigma)}{\underbrace{\bigg\{ p(\sigma) + \lambda^* (\sigma) \bigg\}}} d\sigma.
\label{eq: primal solution}
\end{equation}
This minimization is nothing but training the estimator $f$ using samples who noise levels are distributed according to $p(\sigma) + \lambda^*(\sigma)$. \footnote{For $p(\sigma) + \lambda(\sigma)$ to be a legitimate distribution, we need to normalize it by the constant $Z = \int \{p(\sigma) + \lambda(\sigma)\} d\sigma$. But as far as the minimization in  \eref{eq: primal solution} is concerned, the constant is unimportant.} Therefore, by solving the dual problem we have simultaneously obtained the distribution $\pi^*(\sigma)$, which is $\pi^*(\sigma) = p(\sigma) + \lambda^*(\sigma)$, and the estimator $\fhat$ trained using the distribution $\pi^*$.

As we have discussed, if the admissible set $\calF$ is convex then \eref{eq: main} is convex and so $\fhat$ is exactly the primal solution $f^*$. If $\calF$ is not convex, then $\fhat$ is the solution of the convex relaxation of \eref{eq: main}. The duality gap is $R(f^*) - g(\lambda^*)$.

\subsection{Dual Ascent Algorithm}
The algorithm for solving the dual is based on the fact that the point-wise $\inf_{f} \; L(f,\lambda)$ is concave in $\lambda$. As such, one can use the standard dual ascent method to find the solution. The idea is to sequentially update $\lambda$'s and $f$'s via
\begin{align}
&f^{t+1} = \argmin{f} \;\; \int R(f|\sigma) \bigg\{ p(\sigma) + \lambda^t (\sigma) \bigg\} d\sigma \label{eq: Step 1}\\
&\lambda^{t+1}(\sigma) = \bigg[\lambda^{t}(\sigma) + \alpha^t(\sigma) \bigg\{R(f^{t+1}|\sigma)  - \epsilon(\sigma)\bigg\}\bigg]_+ \label{eq: Step 2}
\end{align}
Here, $\alpha^t$ is the step size of the gradient ascent step, and $[\;\cdot\;]_+ = \max(\cdot,0)$ returns the positive part of the argument. At each iteration, \eref{eq: Step 1} is solved by training an estimator using noise samples drawn from the distribution $\pi(\sigma)^t = p(\sigma) + \lambda^t(\sigma)$. The $\lambda$-step in \eref{eq: Step 2} computes the conditional risk $R(f^{t+1}|\sigma)$ and updates $\lambda$.

Since the dual is convex, the dual ascent algorithm is guaranteed to converge to the dual solution using an appropriate step size. We refer readers to standard texts, e.g., \cite{CMU_notes}.

\section{Uniform Gap}
The solution of \eref{eq: main} depends on the tolerance $\epsilon$. This tolerance $\epsilon$ cannot be arbitrarily small, or otherwise the constraint set will become empty. The smallest $\epsilon$ which still ensures a non-empty constraint set is defined as $\epsilon_{\min}$. The goal of this section is to determine  $\epsilon_{\min}$ and discuss its implications.

\subsection{The Uniform Gap Problem (P2)}
The motivation of studying the so-called Uniform Gap problem is the inadequacy of \eref{eq: main} when the tolerance $\epsilon$ is larger than $\epsilon_{\min}$ (i.e., we tolerate more than needed). The situation can be understood from \fref{fig: illustrationP2}. For any allowable $\epsilon$, the solution returned by \eref{eq: main} can only ensure that the largest gap is no more than $\epsilon$. It is possible that the high-ends have a significantly smaller gap than the low-ends. The gap will become uniform only when $\epsilon = \epsilon_{\min}$ which is typically not known a-priori.

\begin{figure}[!]
\centering
\includegraphics[width=\linewidth]{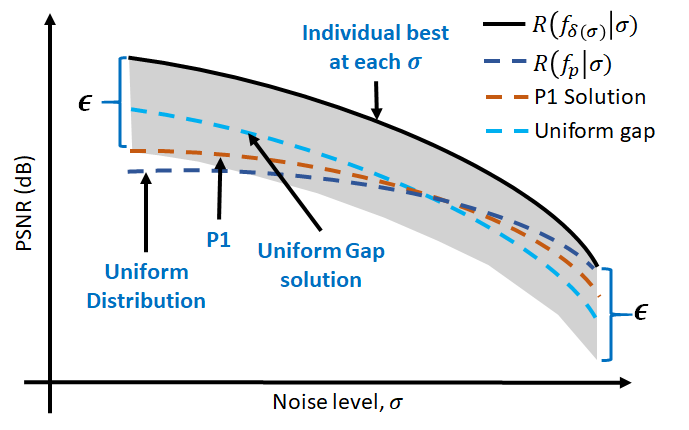}
\vspace{-4ex}
\caption{Difference between \eref{eq: main} and \eref{eq: main 2}. In \eref{eq: main}, the solution only needs to make sure that the worst case gap is upper bounded by $\epsilon$. There is no control over places where the gap is intrinsically less than $\epsilon$. Uniform Gap problem \eref{eq: main 2} addresses this issue by forcing the gap to be uniform. Note that neither \eref{eq: main} nor \eref{eq: main 2} is absolutely more superior. It is a trade-off between the noise levels, and how much we know about the testing distribution $p$.}
\label{fig: illustrationP2}
\vspace{-2ex}
\end{figure}

If we want to maintain a constant gap throughout the entire range of $\sigma$, then the optimization goal will become minimizing the maximum risk gap and not worry about the overall risk. In other words, we solve the following problem:
\begin{align}
    f^* \;\; = \argmin{f} \;\; \sup_{\sigma \in \Omega} \bigg\{ R(f|\sigma)  - R(f_{\delta(\sigma)}|\sigma) \bigg\}. \tag{P2} \label{eq: main 2}
\end{align}

When \eref{eq: main 2} is solved, the corresponding risk gap is exactly $\epsilon_{\min}$, defined as
\begin{equation}
\epsilon_{\min} \bydef \sup_{\sigma \in \Omega} \bigg\{ R(f^*|\sigma)  - R(f_{\delta(\sigma)}|\sigma) \bigg\}.
\end{equation}
The supremum in the above equation can be lifted because by construction, \eref{eq: main 2} guarantees a constant gap for all $\sigma$.

The difference between \eref{eq: main 2} and \eref{eq: main} is the switched roles of the objective function and the constraint. In \eref{eq: main}, the tolerance $\epsilon$ defines a user-controlled upper bound on the risk gap, whereas in \eref{eq: main 2} the $\epsilon$ is eliminated. Note that the omission of $\epsilon$ in \eref{eq: main 2} does not imply better or worse since \eref{eq: main} and \eref{eq: main 2} are serving two different goals. \eref{eq: main} utilizes the underlying testing distribution $p(\sigma)$ whereas \eref{eq: main 2} does not. It is possible that $p(\sigma)$ is skewed towards high noise scenarios so that a constant risk gap will suffer from insufficient performance at high-noise and over-perform at low-noise which does not matter because of $p(\sigma)$.

In practice (i.e., in the absence of any knowledge about an appropriate $\epsilon$), one can solve \eref{eq: main 2} first to obtain the tightest gap $\epsilon_{\min}$. Once $\epsilon_{\min}$ is determined, we can choose an $\epsilon > \epsilon_{\min}$ to minimize the overall risk using \eref{eq: main}.

\subsection{Algorithm for Solving \eref{eq: main 2}}
The algorithm to solve \eref{eq: main 2} is slightly different from that of \eref{eq: main} because of the omission of the constraint.

We first rewrite problem \eref{eq: main 2} as
\begin{align}
    &\minimize{f,t} \quad\quad\;\; t  \label{eq: P2 alternative} \\
    &\subjectto     \quad\;\;\; R(f|\sigma)  - \underset{\bydef r(\sigma)}{\underbrace{R(f_{\delta(\sigma)}|\sigma)}} \le t, \;\; \forall \sigma. \notag
\end{align}
Then the Lagrangian is defined as
\begin{align}
&L(f,t,\lambda)
\bydef t + \int \bigg\{ R(f|\sigma)  - r(\sigma) - t \bigg\} \lambda(\sigma) \; d\sigma \\
& = t\left(1-\int \lambda(\sigma) d\sigma\right) + \int \bigg\{R(f|\sigma) - r(\sigma)\bigg\} \lambda(\sigma) d\sigma. \notag
\end{align}
Minimizing over $f$ and $t$ yields the dual function:
\begin{align}
&g(\lambda)
\bydef \inf_{f,t} L(f,t,\lambda) \\
&=
\begin{cases}
\inf\limits_{f} \; \int \big[R(f|\sigma) - r(\sigma)\big] \lambda(\sigma) d\sigma, &\mbox{if}\; \int \lambda(\sigma) d\sigma = 1,\\
-\infty, &\mbox{otherwise}.
\end{cases}
\notag
\end{align}
Consequently, the dual problem is defined as
\begin{align}
\lambda^* &= \argmax{\lambda \ge 0} \; \inf_f \left\{\int \big[R(f|\sigma) - r(\sigma)\big] \lambda(\sigma) d\sigma\right\}
\label{eq: P2 dual}\\
&\quad\subjectto \int \lambda(\sigma) d\sigma = 1. \notag
\end{align}
Again, if $\calF$ is convex then solving the dual problem \eref{eq: P2 dual} is necessary and sufficient to determine the primal problem \eref{eq: P2 alternative} which is equivalent to \eref{eq: main 2}. The dual problem is solvable using the dual ascent algorithm, where we update $\lambda$ and $f$ according to the following sequence:
\begin{align}
f^{t+1} &= \argmin{f} \; \left\{\int \big[R(f|\sigma) - r(\sigma)\big] \lambda^t(\sigma) d\sigma\right\} \label{eq: P2 alg step 1}\\
\lambda^{t+\frac{1}{2}} &= \bigg[\lambda^t + \alpha^t  \big( R(f^{t+1}|\sigma) - r(\sigma)\big) \bigg]_+ \label{eq: P2 alg step 2}\\
\lambda^{t+1} &= \lambda^{t+\frac{1}{2}} / \int \lambda^{t+\frac{1}{2}}(\sigma)d\sigma. \label{eq: P2 alg step 3}
\end{align}
Here, \eref{eq: P2 alg step 1} solves the inner optimization in \eref{eq: P2 dual} by fixing a $\lambda$, and \eref{eq: P2 alg step 2} is a gradient ascent step for the dual variable. The normalization in \eref{eq: P2 alg step 3} ensures that the constraint of \eref{eq: P2 dual} is satisfied. The non-negativity operation $[\,\cdot\,]_+$ in \eref{eq: P2 alg step 2} can be lifted because by definition $r(\sigma) \bydef R(f_{\delta(\sigma)}|\sigma) \ge R(f|\sigma)$ for all $\sigma$. The final sampling distribution is $\pi^*(\sigma) = \lambda^*(\sigma)$.

Like \eref{eq: main}, the dual ascent algorithm for \eref{eq: main 2} has guaranteed convergence as long as the loss function $\calL$ is convex.

\section{Practical Considerations}
The actual implementation of the dual ascent algorithms for \eref{eq: main} and \eref{eq: main 2} require additional modifications. We list a few of them here.

\textbf{Finite Epochs}. In principle, the $f$-subproblems in \eref{eq: Step 1} and \eref{eq: P2 alg step 1} are determined by training a network completely using the sample distributions at the $t$-th iteration $\pi^t(\sigma) = p(\sigma) + \lambda^t(\sigma)$ and $\pi^t(\sigma) = \lambda^t(\sigma)$, respectively. However, in practice, we can reduce the training time by training the network inexactly. Depending on the specific network architecture and problem type, the number of epochs varies between 10 - 50 epochs per dual ascent iteration.

\textbf{Discretize Noise Levels}. The theoretical results presented in this paper are based on continuous distributions $\lambda(\sigma)$ and $p(\sigma)$. In practice, a continuum is not necessary since nearby noise levels are usually indistinguishable visually. As such, we discretize the noise levels in a finite number of bins. And we use the average-value of each bin as the representative noise level for the bin, so that the integration can be simplified to summation.

\textbf{$\log$-Scale Constraints}. Most image restoration applications measure the restoration quality in the log-scale, e.g., the peak signal-to-noise ratio (PSNR) which is defined as $\text{PSNR} = -10\log_{10} \text{MSE}$ where MSE is the mean squared error. Learning in the log-scale can be achieved by enforcing constraint in the log-space.

We define the the $\log$-scale risk function as:
\begin{align}
R_{\log}(f|\sigma) \bydef \E\bigg[ \log \calL( f(\vx_\sigma),  \vy) \,\big|\,\sigma \bigg]. \label{eq: Rlog}
\end{align}
With this definition, it follows the the constraints in the log-scale are represented as $\sup_{\sigma \in \Omega}\{ R_{\log}(f|\sigma)-R_{\log}(f_{\delta(\sigma)}|\sigma)\} \le \epsilon$. To turn this log-scale constraint into a linear form, we use the follow lemma by exploiting the fact that the risk gap is typically small.
\begin{lemma}
The log-scale constraint
\begin{equation}
\sup_{\sigma \in \Omega} \bigg\{ R_{\log}(f|\sigma)  - R_{\log}(f_{\delta(\sigma)}|\sigma)\bigg\} \le \epsilon
\end{equation}
can be approximated by
\begin{equation}
\sup_{\sigma\in \Omega} \bigg\{\frac{\E[\calL(f(\vx_\sigma),\vy)]}{L_{\delta}(\sigma)}\bigg\}
\le 1 + \epsilon,
\end{equation}
where $L_{\delta}(\sigma)$ is a constant (w.r.t. $f$) such that the log of $L_{\delta}(\sigma)$ equals $R_{\log}(f_{\delta(\sigma)}|\sigma)$:
\begin{equation}
\log L_{\delta}(\sigma) \bydef \E\left[ \log \calL(f_{\delta}(\vx_\sigma),\vy) | \sigma \right].
\end{equation}
\end{lemma}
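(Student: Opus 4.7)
The plan is to transform the log-scale constraint into the stated ratio form via two controlled approximations: a Jensen-type swap of expectation and logarithm, followed by a first-order linearization of the exponential.

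First, unfolding definitions, the hypothesis $R_{\log}(f|\sigma) - R_{\log}(f_{\delta(\sigma)}|\sigma) \le \epsilon$ for all $\sigma \in \Omega$ reads
\[
\E\bigl[\log \calL(f(\vx_\sigma),\vy) \,\big|\, \sigma\bigr] \;\le\; \log L_{\delta}(\sigma) + \epsilon.
\]
Exponentiating both sides gives $\exp\{\E[\log \calL(f(\vx_\sigma),\vy) \,|\, \sigma]\} \le L_{\delta}(\sigma)\, e^{\epsilon}$ for every $\sigma$.

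Second, the key step is to replace $\exp\{\E[\log X]\}$ by $\E[X]$ on the left. By Jensen's inequality (concavity of $\log$) we have $\E[\log X] \le \log \E[X]$, with equality only when $X$ is a.s.\ constant. A delta-method expansion of $\log(\cdot)$ around $\mu_\sigma = \E[\calL|\sigma]$ quantifies the gap,
\[
\E[\log \calL \,|\, \sigma] \;=\; \log \E[\calL \,|\, \sigma] \;-\; \frac{\Var[\calL \,|\, \sigma]}{2\,\E[\calL \,|\, \sigma]^2} + \text{h.o.t.},
\]
so whenever the conditional loss is well-concentrated about its mean (relative variance small) the correction is negligible, giving $\exp\{\E[\log \calL | \sigma]\} \approx \E[\calL | \sigma]$. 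Substituting this in the previous display produces $\E[\calL(f(\vx_\sigma),\vy) \,|\, \sigma] \le L_{\delta}(\sigma)\, e^{\epsilon}$.

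Third, under the stated premise that the risk gap is typically small, the Taylor expansion $e^{\epsilon} = 1 + \epsilon + O(\epsilon^2)$ lets us replace $e^{\epsilon}$ by $1 + \epsilon$. Dividing by $L_{\delta}(\sigma)$ and taking the supremum over $\sigma \in \Omega$ yields the stated bound. The main obstacle is justifying the $\E\log \leftrightarrow \log\E$ swap: it is a one-sided inequality rather than an identity, so the lemma is genuinely an approximation and its tightness rests on the concentration assumption that the per-sample squared-error loss does not fluctuate strongly about its conditional mean---a condition typical for trained image denoisers where the loss surface is smooth. The final Taylor linearization of $e^{\epsilon}$ is routine once $\epsilon$ is taken small.
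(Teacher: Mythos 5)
Your derivation is correct as an approximation argument and lands on the stated inequality, but it takes a genuinely different route from the paper. The paper never exponentiates the constraint: it writes $R_{\log}(f|\sigma)-R_{\log}(f_{\delta(\sigma)}|\sigma)=\E\big[\log\big(1+\tfrac{\calL(f(\vx_\sigma),\vy)-L_\delta(\sigma)}{L_\delta(\sigma)}\big)\,\big|\,\sigma\big]$ and applies the single pointwise linearization $\log(1+x)\approx x$ under the assumption $|\calL(f(\vx_\sigma),\vy)-L_\delta(\sigma)|\ll L_\delta(\sigma)$, after which the expectation is linear and the constraint rearranges directly to $\E[\calL|\sigma]/L_\delta(\sigma)\le 1+\epsilon$ with no further steps. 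You instead exponentiate first, then need two separate approximations: the swap $\exp\{\E[\log\calL\,|\,\sigma]\}\approx\E[\calL\,|\,\sigma]$ (justified by a delta-method/concentration argument on $\Var[\calL\,|\,\sigma]/\E[\calL\,|\,\sigma]^2$), and the Taylor step $e^\epsilon\approx 1+\epsilon$. The two routes rest on closely related but not identical smallness assumptions (yours is about fluctuation of $\calL$ around its own conditional mean; the paper's is about its deviation from $L_\delta(\sigma)$, which is a geometric-mean-type constant for the \emph{best} estimator, not for $f$). What your route buys is an explicit accounting of the Jensen gap: since $\exp\{\E[\log X]\}\le\E[X]$, the surrogate constraint $\E[\calL|\sigma]/L_\delta(\sigma)\le 1+\epsilon$ is (up to the $e^\epsilon$ versus $1+\epsilon$ difference) a \emph{sufficient} condition for the original log-scale constraint, i.e., the relaxation is conservative in a definite direction --- a point the paper's symmetric ``$\approx$'' glosses over. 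What the paper's route buys is economy: one approximation instead of two, and no separate requirement that $\epsilon$ itself be small.
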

\vspace{-2ex}
\begin{proof}
First, we observe that $R_{\log}(f_{\delta(\sigma)}|\sigma)$ is a deterministic quantity and is independent of $f$. Using the fact that $L_{\delta}(\sigma)$ is a deterministic constant, we can show that
\begin{align*}
&R_{\log}(f|\sigma)  - R_{\log}(f_{\delta(\sigma)}|\sigma)\\
&=\E\left[ \log \calL(f(\vx_\sigma),\vy) | \sigma \right] - \log L_{\delta}(\sigma)\\
&= \E\left[ \log \left
(\frac{\calL(f(\vx_\sigma),\vy)}{L_{\delta}(\sigma)} \right) \; \bigg| \; \sigma \right]\\
&= \E\left[ \log \left
( 1 + \frac{\calL(f(\vx_\sigma),\vy) - L_{\delta}(\sigma)}{L_{\delta}(\sigma)} \right) \; \bigg| \; \sigma \right]\\
&\approx \E\left[  \frac{\calL(f(\vx_\sigma),\vy) - L_{\delta}(\sigma)}{L_{\delta}(\sigma)} \bigg| \; \sigma\right],
\end{align*}
where we used the fact that $\calL(f(\vx_\sigma),\vy) - L_{\delta}(\sigma) \ll L_{\delta}(\sigma)$ so that $\log (1+x) \approx x$. Putting these into the constraint $R_{\log}(f|\sigma)  - R_{\log}(f_{\delta(\sigma)}|\sigma) \le \epsilon$ and rearranging the terms completes the proof.
\end{proof}

The consequence of the above analysis leads to the following approximate problem for training in the log-scale:
\begin{align}
    f^* \;\bydef& \argmin{f}\;\; R(f), \tag{P1-log}\label{eq: log problem}\\
    &\text{s.t.}     \;\; \sup_{\sigma\in \Omega} \bigg\{\frac{R(f|\sigma)} {L_{\delta}(\sigma)} \bigg\}
\le 1 + \epsilon. \notag
\end{align}
The implication of \eref{eq: log problem} is that the optimization problem with $\log$-scale constraints can be solved using the linear-scale approaches. Notice that the new distribution is now $\pi(\sigma) = p(\sigma) + \frac{\lambda(\sigma)}{L_{\delta}(\sigma)}$. The other change is that we replace $R(f_{\delta(\sigma)}|\sigma)$ with $L_\delta(\sigma)$, which are determined offline.

\section{Experiments}
We evaluate the proposed framework through two experiments. The first experiment is based on a linear estimator where analytic solutions are available to verify the dual ascent algorithm. The second experiment is based on training a real deep neural network.

\subsection{Linear Estimator}
We consider a linear (scalar) estimator so that we can access the analytic solutions. We define the clean signal as $y \sim \calN(0,\sigma_y^2)$ and the noisy signal as $x = y + \sigma \eta$, where $\eta \sim \calN(0,1)$. The estimator we choose here is $f_{\pi}(x) = a_{\pi}x$ for some parameter $a_{\pi}$ depending on the underlying sampling distribution $\pi$.

Because of the linear model formulation, we can train the estimator $\widehat{a}_{\pi}$ using closed-form equation as
\begin{equation*}
\widehat{a}_{\pi} = \argmin{a} \int \E[(ax - y)^2 | \sigma] \pi(\sigma) d\sigma = \frac{\sigma_y^2}{ \sigma_y^2 + \overline{\sigma}_{\pi}^2},
\end{equation*}
where $\overline{\sigma}_{\pi}^2 \bydef \int \sigma^2\pi(\sigma)d\sigma$. Substituting $\widehat{a}_{\pi}$ into the loss we can show that the conditional risk is
\begin{align*}
&R(f_{\pi}|\sigma)
= \E[(\widehat{a}_{\pi}x - y)^2 | \sigma] \\
&= \frac{\sigma_y^4\left[\sigma_y^2(\sigma_y^2 + \sigma^2) - 2\sigma_y^2(\sigma_y^2+\overline{\sigma}_{\pi}^2) + (\sigma_y^2+\overline{\sigma}_{\pi}^2)^2\right]}{(\sigma_y^2+\overline{\sigma}_{\pi}^2)^2}.
\end{align*}
Based on this condition risks, we can run the dual ascent algorithm to alternatingly estimate $\pi$ and $\widehat{a}_{\pi}$ according to \eref{eq: main}. \fref{fig: linear} shows the conditional risks returned by different iterations of the dual ascent algorithm. In this numerical example, we let $\sigma_y = 10$ and $\epsilon = 9$. Observe that as the dual ascent algorithm proceeds, the worst case gap is reducing \footnote{The small gap in the middle of the plot is intrinsic to this problem, since for any $\overline{\sigma}_{\pi}^2$ there always exists a $\sigma$ such that $\overline{\sigma}_{\pi}^2 = \sigma$. At this $\sigma$, the conditional risk will always touch the ideal curve.}. When the algorithm converges, it matches exactly with the theoretical solution.
\begin{figure}[!htbp]
\centering
\includegraphics[width=0.9\linewidth]{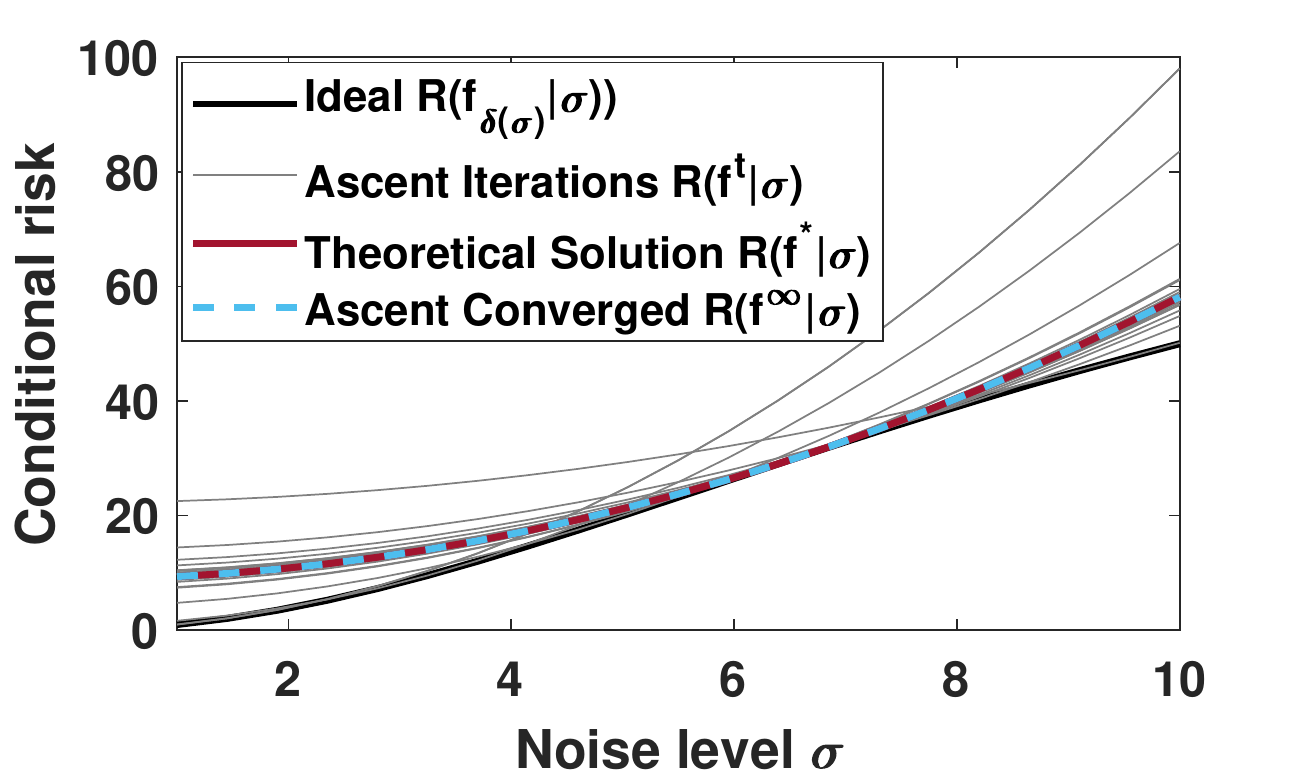}
\caption{Conditional risks of the linear problem. As the dual ascent algorithm proceeds, the risk approaches the optimal solution.}
\label{fig: linear}
\end{figure}

\begin{table*}
\centering
\footnotesize{
  \begin{tabular}[!]{ccccccccccc}
\hline
\hline
\scriptsize{\textbf{Noise level ($\sigma$)}}&0-10&10-20&20-30&30-40&40-50&50-60&60-70&70-80&80-90&90-100\\
  \hline
    &\multicolumn{10}{c}{Ideal (Best Individually Trained Denoisers)}\\
  \scriptsize{\textbf{PSNR}}&38.04&31.73&29.23&27.72&26.66&25.86&25.24&24.70&24.25&23.84\\
\hline
  &\multicolumn{10}{c}{Uniform Distribution}\\
  \scriptsize{\textbf{Distribution}}&10.0\%&10.0\%&10.0\%&10.0\%&10.0\%&10.0\%&10.0\%&10.0\%&10.0\%&10.0\%\\
  \scriptsize{\textbf{PSNR}}&37.24&31.41&29.04&27.60&26.58&25.81&25.19&24.67&24.23&23.84\\

  \hline
  &\multicolumn{10}{c}{Solution to \eref{eq: main} with 0.4dB gap}\\
\scriptsize{\textbf{Distribution}}&32.7\%&12.0\%&9.4\%&7.9\%&6.8\%&6.3\%&6.4\%&6.2\%&6.2\%&6.1\%\\
\scriptsize{\textbf{PSNR}}&37.64&31.46&29.03&27.58&26.56&25.78&25.15&24.63&24.19&23.80\\

  \hline
  &\multicolumn{10}{c}{Solution to \eref{eq: main 2}}\\
\scriptsize{\textbf{Distribution}}&81.3\%&7.6\%&3.4\%&2.0\%&1.3\%&1.0\%&0.9\%&0.9\%&0.8\%&0.8\%\\
\scriptsize{\textbf{PSNR}}&37.86&31.54&29.06&27.57&26.53&25.74&25.10&24.57&24.12&23.70\\

  \hline
  \end{tabular}
  }
\caption{\label{table:denoiser_table} Results of Section~\ref{sec:NNexp}. This table shows the PSNR values returned by one-size-fits-all DnCNN denoisers whose sample distributions are defined according to (i) uniform distribution, (ii) solution of \eref{eq: main}, and (iii) solution of \eref{eq: main 2}.}
\end{table*}

\subsection{Deep Neural Networks} \label{sec:NNexp}
The second experiment evaluates the effectiveness of the proposed framework on real deep neural networks for the task of denoising. We shall focus on the MSE loss with PSNR constraints, although our theory applies to other loss functions such as SSIM \cite{ssim} and MS-SSIM \cite{wang2003multiscale} also as long as they are convex. The noise model we assume is that $\vx_\sigma = \vy + \sigma \veta$, where $\veta \sim \mathcal{N}(0, \mI)$ with $\sigma \in [0,100]$ (w.r.t. an 8-bit signal of 256 levels). The network we consider is a 20-layer DnCNN \cite{Zhang2017_cvpr}. We choose DnCNN just for demonstration. Since our framework does not depend on a specific network architecture, the theoretical results hold regardless the choice of the networks.

The training procedure is as follows. The training set consists of 400 images from the dataset in \cite{martin2001database}. Each image has a size of $180 \times 180$. We randomly crop $50 \times 50$ patches from these images to construct the training set. The total number of patches we used is determined by the mini-batch size of the training algorithm. Specifically, for each dual ascent iteration we use 3000 mini-batches where each batch consists of 128 patches. This gives us 384k training patches per epoch. To create the noisy training samples, for each patch we add additive i.i.d. Gaussian noise where the noise level is randomly drawn from the distribution $\pi(\sigma)$. The noise generation process is done online. We run our proposed algorithm for 25 dual ascent iterations, where each iteration consists of 10 epochs. For computational efficiency, we break the noise range $[0,100]$ into 10 equally sized bins. For example, a uniform distribution corresponds to allocating 10\% of the total number of training samples per bin. The validation set consists of 12 ``standard images'' (e.g., Lena).  The testing set is the BSD68 dataset \cite{roth2005fields}, tested individually for every noise bin. The testing distribution $p(\sigma)$ for \eref{eq: main} is assumed to be uniform in \fref{fig:denoiser_plot}. Two other distributions are illustrated in \fref{fig:Uniform 25}. Notice that \eref{eq: main 2} does not require the testing distribution to be known.

The average PSNR values (conditional on $\sigma$) are reported in \tref{table:denoiser_table} and the performance gaps are illustrated in \fref{fig:denoiser_plot}. Specifically, the first two rows of the Table show the PSNR of the best individually trained denosiers and the uniform distributions. The proposed sampling distributions and the corresponding PSNR values are shown in the third row for \eref{eq: main} and the fourth row for \eref{eq: main 2}. For \eref{eq: main}, we set the tolerance level as 0.4dB. \tref{table:denoiser_table} and \fref{fig:denoiser_plot} confirm the validity of our method. A more interesting observation is the percentages of the training samples. For \eref{eq: main}, we need to allocate 32.7\% of the data to low-noise, and this percentage goes up to 81.3\% for \eref{eq: main 2}. This suggests that the optimal sampling distribution could be substantially different from the uniform distribution we use today.

\begin{figure}[!]
\centering
    \begin{tabular}{c}
    \includegraphics[width=\linewidth]{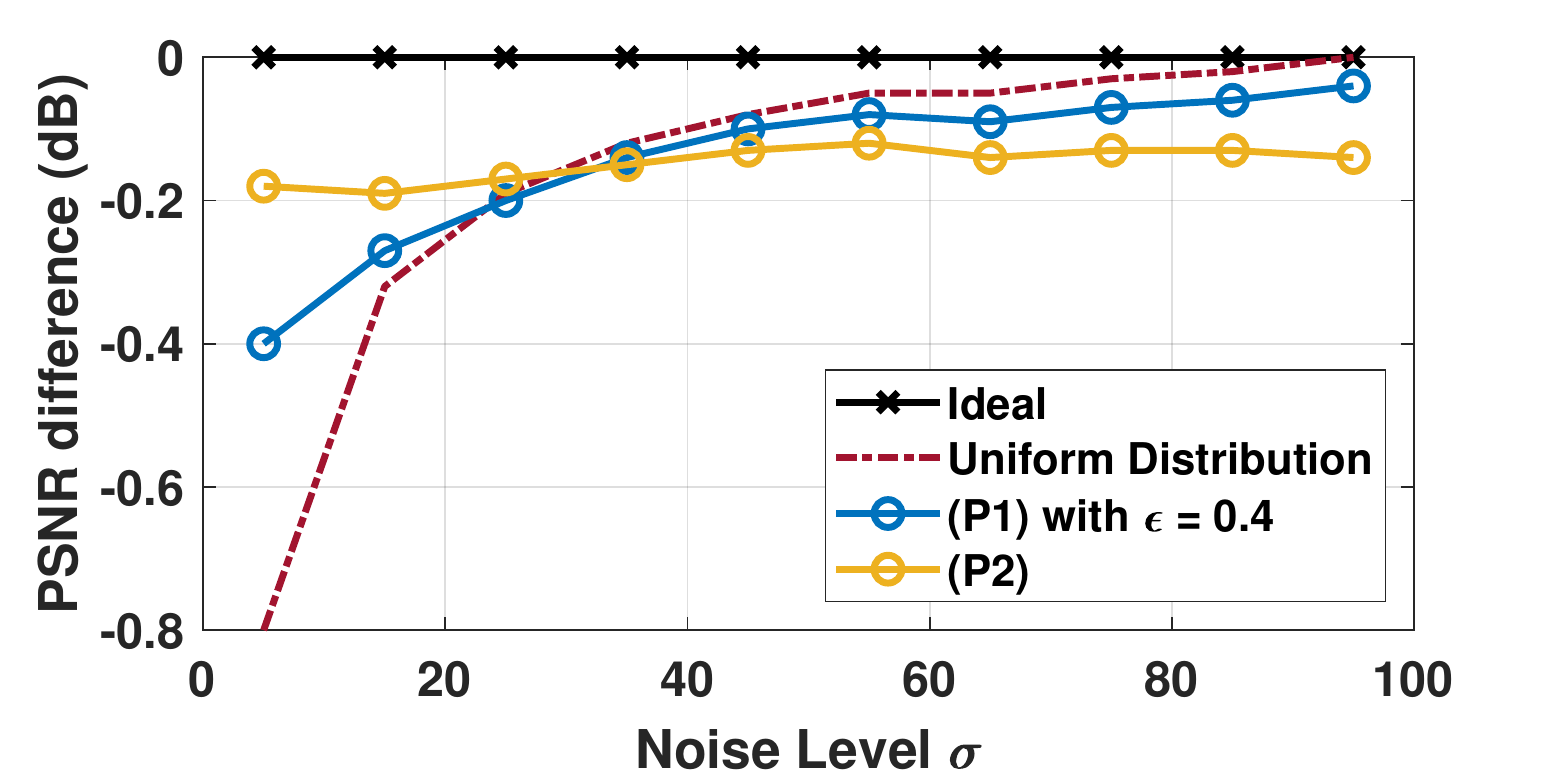}
    \end{tabular}
    \vspace{-2ex}
    \caption{\label{fig:denoiser_plot} This figure shows the PSNR difference between the one-size-fits-all denoisers and the ideal denoiser. Observe that the uniform distribution favors high-noise cases and performs poorly on low-noise cases. By using the proposed algorithm we are able to allocate training samples such that the gap is consistent across the range. \eref{eq: main} ensures that the gap will not exceed 0.4dB, whereas \eref{eq: main 2} ensures that the gap is constant.}
    \vspace{-2ex}
\end{figure}

\section{Discussions}
\subsection{Consistent gap = better?}
\begin{figure*}[t]
\centering
    \begin{tabular}{cc}
    \hspace{-2ex}\includegraphics[width=0.49\linewidth]{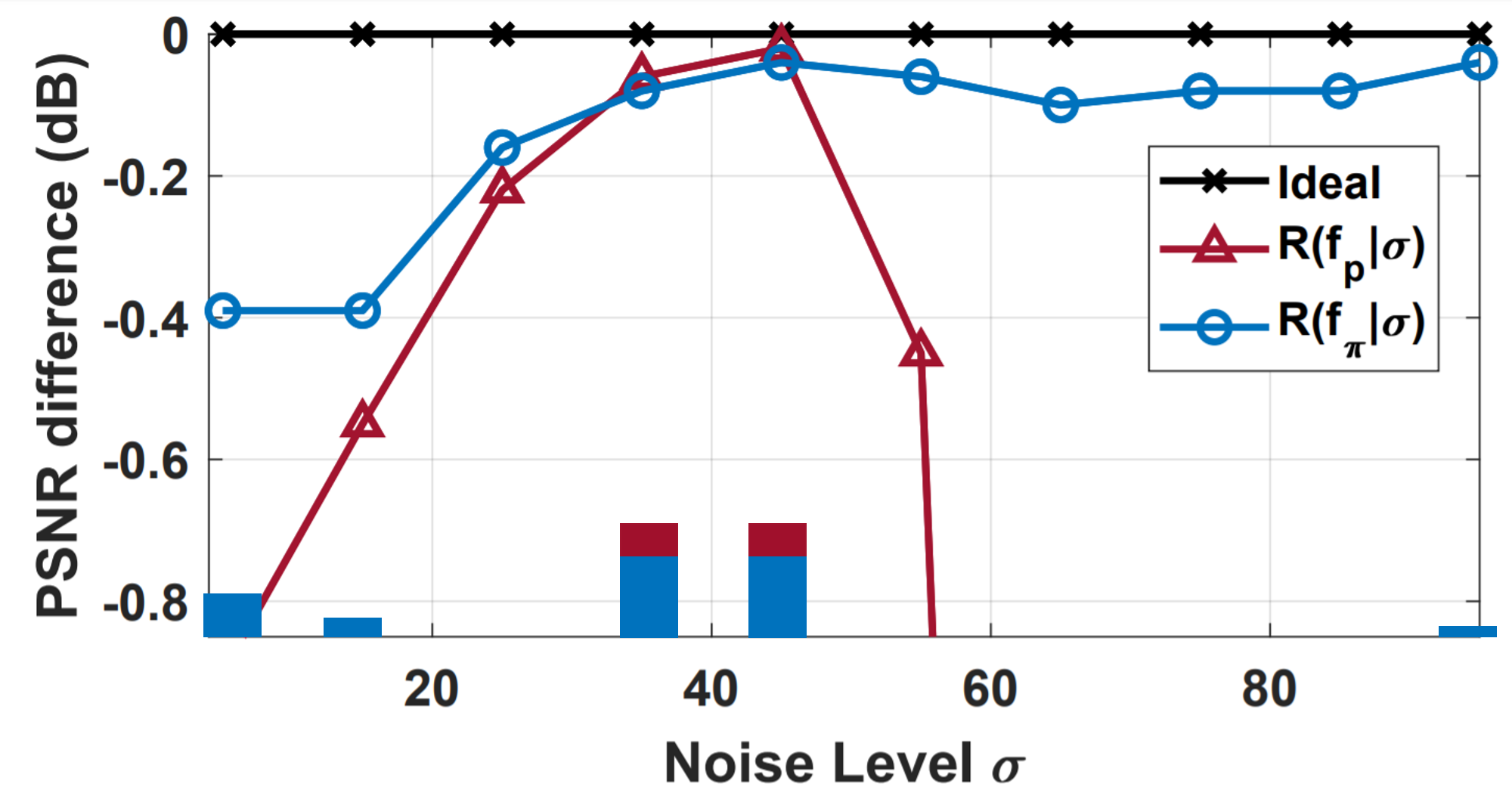}&
    \hspace{-4ex}\includegraphics[width=0.49\linewidth]{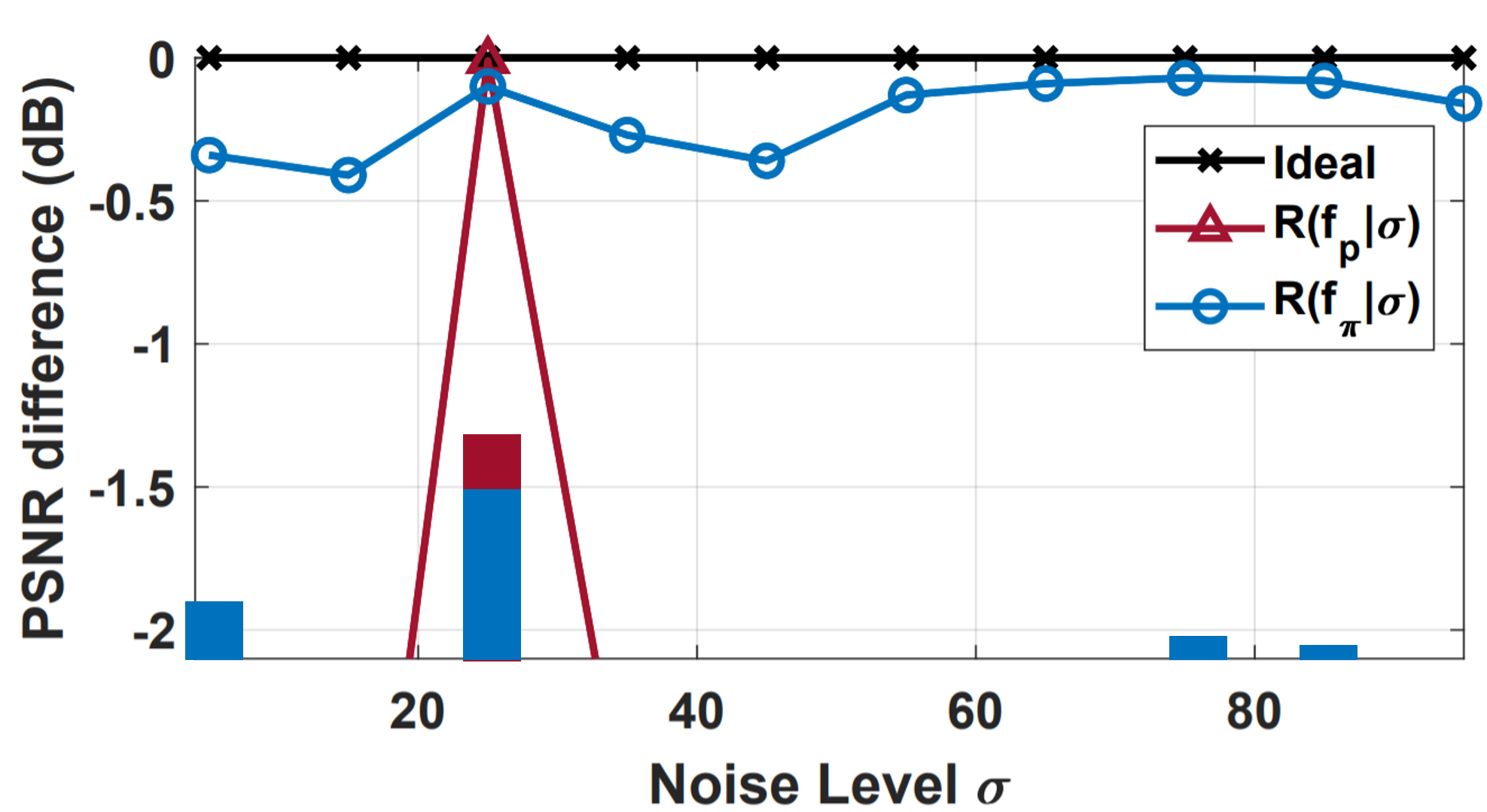}\\
     \hspace{-2ex}(a) Hypothesized distribution is Uniform[30, 50] & \hspace{-4ex}(b) Hypothesized distribution is Uniform[20, 30].
    \end{tabular}
    \vspace{-1ex}
    \caption{Usage of \eref{eq: main} when we are not certain about the true distribution. (a) The true distribution is unknown but we hypothesize that it is Uniform[30,50]. If we train the network using this distribution, we obtain the red curve. \eref{eq: main} starts with this hypothesis, and returns the blue curve. (b) Same experiment by hypothesizing that the distribution is Uniform[20,30]. Observe the robust performance of our method in both cases. The experimental setup is the same as Section~\ref{sec:NNexp}.}
\label{fig:Uniform 25}
\vspace{-2ex}
\end{figure*}

It is important to note that one-size-fits-all denosiers are about the trade-off between high-noise and low-noise cases; we offer more degrees of freedom for the low-noise cases because the high-noise cases can be learned well using fewer samples. However, achieving a consistent gap does not mean that we are doing ``better''. The solution of \eref{eq: main 2} is not necessarily ``better'' than the solution of \eref{eq: main}. The ultimate decision is application specific. If we care more about the heavy noise cases such as imaging in the dark (e.g., \cite{Gnanas_Chan_20,Choi_Elgendy_Chan_2018}) and we are willing to compromise some performance for the weak noise cases, then \eref{eq: main} could be a better than the uniform gap solution returned by \eref{eq: main 2}. Vice versa, if we know nothing about the testing distribution and we want to be conservative, then \eref{eq: main 2} is more useful.

Another consideration is how much we know about $p(\sigma)$. If we are absolutely certain that the noise is concentrated at a single value, then we should just allocate all the samples at that noise level. However, if we know something about $p(\sigma)$ but we are not absolutely sure, then \eref{eq: main} can provide the worst case performance guarantee. This is illustrated in \fref{fig:Uniform 25}, where we solved \eref{eq: main} using two hypothesized distributions. It can be observed that if we train the network using the hypothesized distributions, the performance could be bad for extreme situations. In contrast, \eref{eq: main} compromises the peak performance by offering more robust performance in other situations.

\subsection{Rule-of-thumb distribution --- the ``80-20'' rule}
Suppose that we are only looking at image denoisers with $p(\sigma)$ being uniform, and our goal is to achieve a consistent gap, then we can construct some ``rule-of-thumb'' distributions that are applicable to a few network architectures. \fref{fig:ruleot} shows three deep networks: REDNet \cite{mao2016image}, DnCNN \cite{Zhang2017_cvpr}, FFDNet \cite{Zhang2017_arxiv}, all trained using a so-called ``80-20'' rule. In this rule, we allocate the majority of the samples to the weak cases and a few to the strong cases. The exact percentage of the ``80-20'' rule is network dependent but the trend is usually similar. For example, in \fref{fig:ruleot} we allocate 70\% to [0,10], 15\% to [10,20], 8\% to [20,30], 5\% to [30,40], and 2\% to [40,50] to all the three networks. While there are some fluctuations of the PSNR differences, in general the resulting curves are quite uniform.

\begin{figure}[h]
\includegraphics[width = 1 \linewidth]{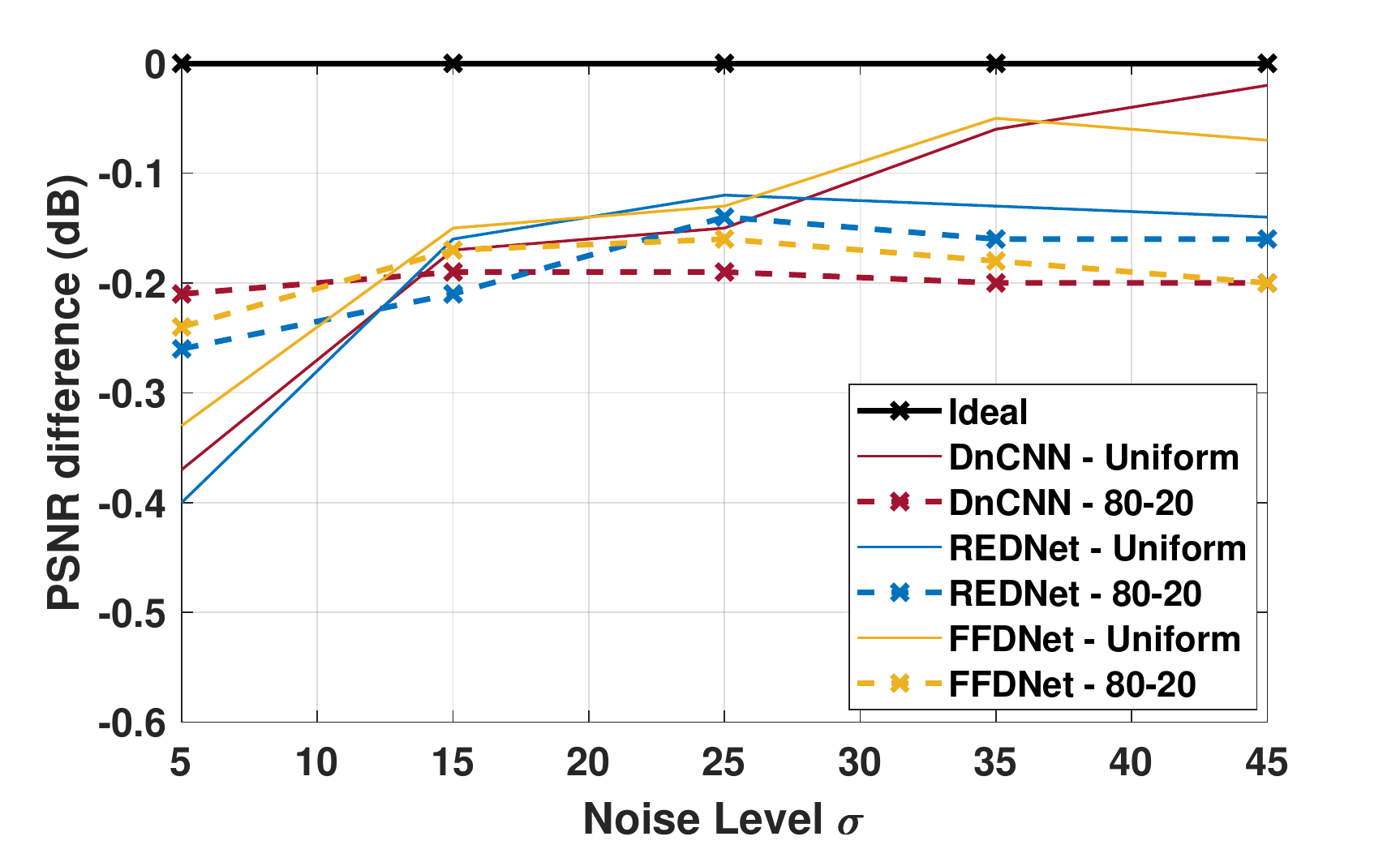}
\vspace{-4.0ex}
\caption{ \label{fig:ruleot} ``80-20'' Rule. We train the network with training samples drawn from different different noise levels according to the following distribution. $[0,10] - 70\%$, $[10,20] - 15\%$, $[20,30] - 8\%$, $[30,40] - 5\%$, $[40-50] - 2\%$. We observe that this distribution gives reasonably consistent performance at all the noise levels over all the denoisers considered here. }
\vspace{-2.0ex}
\end{figure}

\section{Conclusion}
Imbalanced sampling of the training set is arguably very common in image restoration and related tasks. This paper presents a framework which allows us to allocate training samples so that the overall performance of the one-size-fits-all denoiser is consistent across all noise levels. The convexity of the problem, the minimax formulation, and the dual ascent algorithm appear to be general for all learning-based estimators. The idea is likely to be applicable to adversarial training in classification tasks.

\section*{Acknowledgement}
The work is supported, in part, by the US National Science Foundation under grants CCF-1763896 and CCF-1718007. The authors thank Yash Sanghvi and Guanzhe Hong for many insightful discussions, and the anonymous reviewers for the constructive feedback.

\bibliographystyle{./TeX/icml2020}
\bibliography{refs}

\end{document}